\documentclass[letterpaper, 10 pt, conference]{ieeeconf}  

\IEEEoverridecommandlockouts                              

\title{\LARGE \bf
A Closed-Form CLF-CBF Controller for Whole-Body\\ 
Continuum Soft Robot Collision Avoidance
}

\author{Kiwan Wong$^{1}$, Maximillian Stölzle$^{1}$, Wei Xiao$^{2, 1}$, Daniela Rus$^{1}$
\thanks{
    $^{1}$Computer Science and Artificial Intelligence Laboratory, Massachusetts Institute of Technology, Cambridge, MA 02139, USA {\tt\scriptsize \{ kiwan588, mstolzle, weixy, rus \}@mit.edu}.
    $^{2}$Robotics Engineering Department, Worcester Polytechnic Institute, Worcester, MA 01609, USA {\tt\scriptsize wxiao3@wpi.edu}.
}
\thanks{The work by K.~Wong was supported by The Hong Kong Jockey Club Scholarships. We are grateful to the Singapore–MIT Alliance for Research and Technology (SMART) M3S for support of this work.}
}

\usepackage{xcolor}
\usepackage{siunitx}
\usepackage{subcaption}
\usepackage{booktabs}
\usepackage{algorithm}
\usepackage{algorithmic}
\usepackage{graphicx}
\usepackage{amsmath,amsfonts,bm,mathtools} 
\usepackage{fix-cm}
\usepackage{url}
\usepackage[capitalise,nameinlink]{cleveref}

\usepackage{amsthm}

\newtheoremstyle{runindef}
  {}                      
  {}                      
  {\normalfont}           
  {}                      
  {\itshape}              
  {:}                     
  { }                     
  {\thmname{#1}\ \thmnumber{#2}\thmnote{\,(#3)}} %
\theoremstyle{runindef}
\newtheorem{definition}{Definition}

\newtheoremstyle{runinthm}
  { }               
  { }               
  {\normalfont}     
  { }               
  {\bfseries}       
  {:}               
  { }               
  {\thmname{#1}\ \thmnumber{#2}\thmnote{\,(#3)}}

\theoremstyle{runinthm}

\newtheorem{lemma}{Lemma}

\renewcommand{\baselinestretch}{0.867}

\crefname{figure}{Fig.}{Figs.}      
\Crefname{figure}{Fig.}{Figs.}     
\usepackage{acro}
\newcommand{\newac}[2]{\DeclareAcronym{#1}{short=#1,long=#2}}
\newac{APF}{Artificial Potential Field}
\newac{CBF}{Control Barrier Function}
\newac{CLF}{Control Lyapunov Function}
\newac{CC}{Constant Curvature}
\newac{CS}{Constant Strain}
\newac{COM}{Center of Mass}
\newac{DCSAT}{Differentiable Conservative SAT}
\newac{DCM}{Discretized Cosserat Rod Model}
\newac{DOF}{Degrees of Freedom}
\newac{EOM}{Equations of Motion}
\newac{GVS}{Geometric Variable Strain} 
\newac{HOCBF}{High-Order CBF}
\newac{HOCLF}{High-Order CLF}
\newac{JIT}{Just-in-Time}
\newac{LNN}{Lagrangian Neural Network}
\newac{LSE}{Log-Sum-Exp}
\newac{LSTM}{Long Short-Term Memory}
\newac{MAE}{Mean Absolute Error}
\newac{ML}{Machine Learning}
\newac{MSE}{Mean Squared Error}
\newac{MPC}{Model Predictive Control}
\newac{PAC}{Piecewise Affine Curvature}
\newac{PCS}{Piecewise Constant Strain}
\newac{PCC}{Piecewise Constant Curvature}
\newac{PDE}{Partial Differential Equation}
\newac{QP}{Quadratic Program}
\newac{RL}{Reinforcement Learning}
\newac{RMSE}{Root Mean-Squared Error}
\newac{SAT}{Separating Axis Theorem}
\newac{SSAT}{Smooth SAT}
    
\begin{document}
\bstctlcite{IEEEexample:BSTcontrol}

\maketitle
\thispagestyle{empty}
\pagestyle{empty}

\begin{abstract}

%
%
Safe operation is critical for deploying robots in human-centered 3D environments. Soft continuum manipulators offer passive safety through compliance but require active control to ensure reliable collision avoidance. Existing methods, such as sampling-based planning, are computationally expensive and lack formal safety guarantees, limiting real-time whole-body collision avoidance. This paper presents a closed-form Control Lyapunov Function–Control Barrier Function (CLF–CBF) controller that enforces real-time, 3D obstacle avoidance without online optimization. By analytically embedding safety constraints into the control input, the method guarantees stability and safety under the stated modeling assumptions, improving optimality guarantees, avoiding feasibility issues commonly encountered in online optimization-based approaches, and being up to 10x faster than common online optimization-based approaches for solving the CLF-CBF QP and up to 100x faster than traditional sampling-based planning approaches. Simulation and hardware experiments on a tendon-driven soft manipulator demonstrate accurate 3D trajectory tracking and robust avoidance in cluttered environments. The proposed framework enables scalable, provably safe control for soft robots operating in dynamic, safety-critical settings.
\end{abstract}

\section{Introduction}
Deploying robots in human-centered environments, such as manufacturing, healthcare, and assistive applications~\cite{hall2019acceptance}, demands not only demonstrable safety~\cite{stolzle2025phdthesis} but also user trust in the robot’s ability to operate predictably and avoid hazardous collisions. Soft continuum manipulators offer inherent safety advantages by embedding compliance directly into their physical structure~\cite{rus2015design}. Nevertheless, as the field advances toward higher precision, larger payloads, and more dynamic tasks, modern soft robot designs increasingly incorporate greater stiffness~\cite{bruder2023increasing}, stronger actuation~\cite{tang2020leveraging}, and hybrid rigid–soft architectures~\cite{zuo2025umarm}. Under these conditions, passive compliance alone is insufficient: soft manipulators can still produce significant impact forces, suffer structural damage from excessive deformation, or experience instability during environmental contact. Consequently, formal, proactive obstacle avoidance mechanisms are essential to ensure reliable and safe operation in safety-critical scenarios.

\begin{figure}[htbp]
    \centering
    \includegraphics[width=1.0\linewidth]{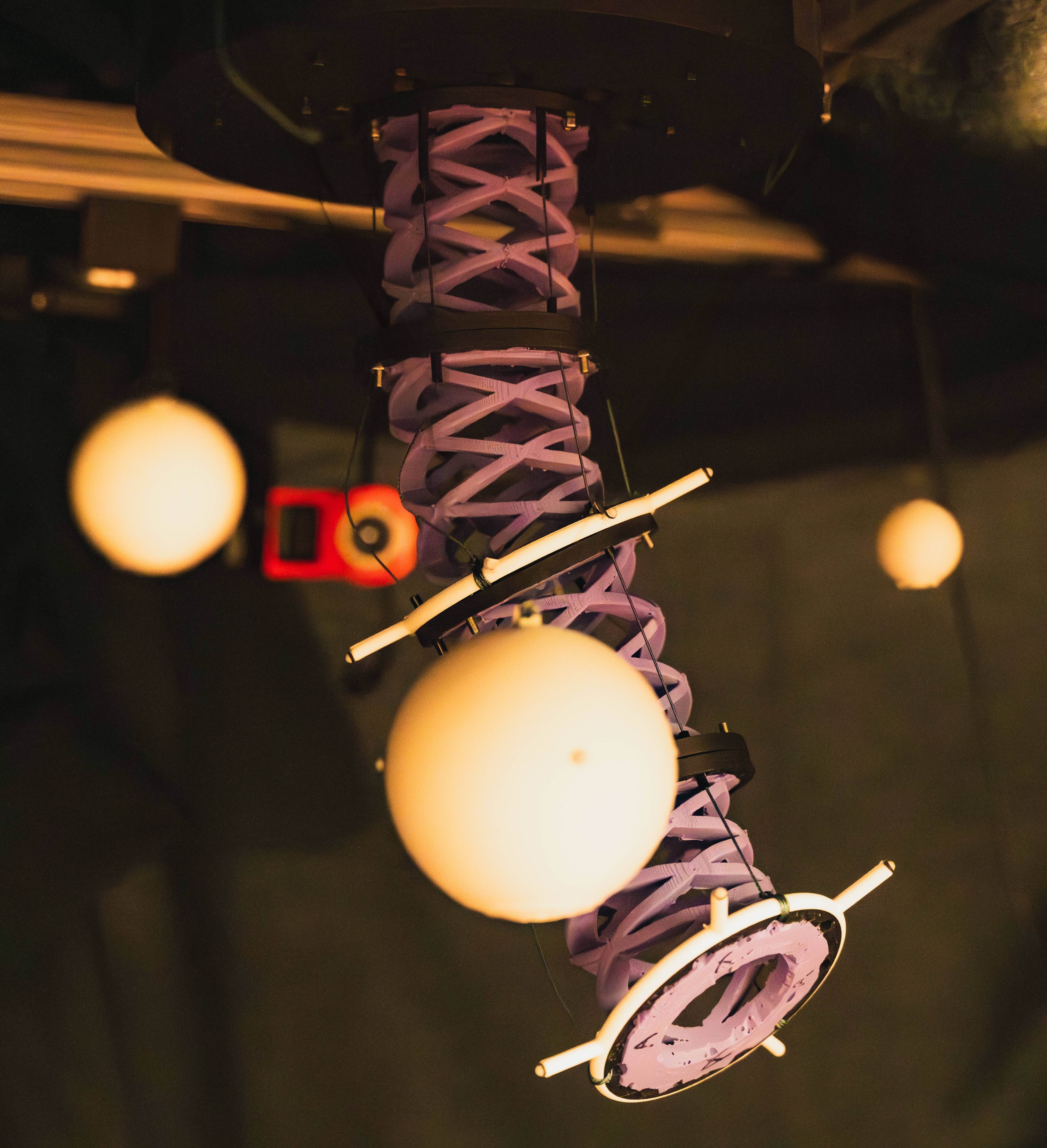}
    \caption{Experimental setup of the tendon-driven soft robotic manipulator used to validate the proposed closed-form CLF–CBF controller. The two-segment robot is mounted to a ceiling fixture and operates in an environment with spherical obstacles. Reflective markers are attached to each segment for real-time motion capture and feedback control during safe trajectory tracking experiments.}
    \label{fig:enhanced-setup}
\end{figure}

Although the distinctive characteristics and complex deformations of continuum soft robots necessitate specialized techniques, the literature on obstacle avoidance for 3d continuum soft robots remains limited, largely focusing on path- or trajectory-planning formulations tackled with sampling-based planners~\cite{torres2015motion, kuntz2017motion, meng2022rrt, li2024s, lang2025obstacle}, such as RRT*~\cite{karaman2011sampling}, and \ac{MPC}-based motion planning~\cite{spinelli2022unified, hachen2025non}. Related work on general path and motion planning for continuum soft robots is likewise sparse and tends to emphasize search-based~\cite{rao2024towards, ouyoucef2025durability} and sampling-based methods~\cite{du2025physics}.
However, search-based planners typically require discretizing the task or configuration space—sacrificing precision—and running sampling-based planners online can be computationally expensive. Moreover, because these methods generally do not produce actuation setpoints, they often necessitate complex cascaded architectures with high-rate inner-loop controllers.  Classical reactive approaches like potential field methods, have also been explored in other robotic settings; however, they typically lack formal safety guarantees.

A promising alternative direction is to enforce safety online using \ac{CBF}-based constraints. This typically requires solving a \ac{QP} in real time, either formulated as a \ac{CBF} safety filter on a nominal controller~\cite{patterson2024safe, xu2024hybrid, dickson2025safe} or by jointly optimizing task progress through \ac{CLF}-based objectives in a CLF-CBF framework~\cite{wong2025contact}. However, the continuous deformability of soft robots poses inherent challenges to such online optimization: accurately modeling their kinematics and dynamics often demands many degrees of freedom (e.g., many segments in PCC/PCS models)~\cite{della2023model, armanini2023soft}, and faithful collision detection via geometric primitives (e.g., spheres or convex polygons) requires fine-grained discretization, yielding a large number of safety constraints. Combined, these factors render solving the optimization problem online challenging, limit attainable control frequencies or force model or constraint simplifications, thereby weakening formal safety guarantees. Moreover, optimization-based approaches can suffer from numerical instability, suboptimality when tight runtime budgets preclude global search, and infeasibility when multiple constraints interact—a common scenario in whole-body avoidance. As a result, current methods either lack real-time performance, fail to scale to high-dimensional soft bodies, or cannot guarantee formal whole-body safety.

In this paper, we address these limitations by introducing a closed-form CLF-CBF controller for whole-body obstacle avoidance of soft continuum robots.
Unlike traditional \ac{QP}-based formulations, the proposed control law directly embeds the \ac{CBF} constraints into the control input through a closed-form expression, avoiding online optimization entirely.
This enables real-time safety enforcement even with a large number of safety constraints, making it particularly suitable for high-dimensional soft robots operating in cluttered environments.
Furthermore, the closed-form structure improves numerical stability and eliminates feasibility issues commonly encountered in optimization-based approaches.

The main contributions of this work are:
\begin{itemize}
\item A closed-form CLF--CBF controller for high-dimensional continuum soft robots that ensures real-time whole-body obstacle avoidance under dense collision constraints, without requiring online QP solving, addressing a regime where existing explicit CBF methods for rigid or low-dimensional systems do not readily apply.
\item Extensive simulation and real-world hardware experiments results demonstrating real-time performance, formal safety guarantees, and accurate task-space tracking in cluttered environments.
\item Comparison of the proposed closed-form CLF-CBF controller against an obstacle avoidance baseline strategy consisting of an RRT* planner and a kinematic, obstacle-unaware low-level controller on a setpoint regulation task in simulation. 
\end{itemize}

\noindent The code is publicly available on GitHub.\footnote{\scriptsize \url{https://github.com/KWWnoob/closed-form-clf-cbf-soft-robot}}
\section{Background}
This section introduces the background necessary for introducing the methodology.

\subsection{Kinematics of a Tendon-Driven Soft Robotic Arm}
We model the kinematics of a soft robot moving in 3D space via the \ac{PCS}~\cite{renda2018discrete} formulation, which approximates the continuous backbone by discretizing it into $N$ segments, where each segment exhibits spatially constant strain $\bm{\xi}_i$
\begin{equation}
    \bm{\xi}_i = 
    \begin{bmatrix}
        \kappa_{x,i} & \kappa_{y,i} & \kappa_{z,i} & \sigma_{x,i} & \sigma_{y,i} & \sigma_{z,i}
    \end{bmatrix}^\top
    \in \mathbb{R}^6,
\end{equation}
where $\kappa_{y,i}, \kappa_{z,i}$ represent the bending strains, $\kappa_{x,i}$ is the torsional strain around the backbone axis, $\sigma_{x,i}$ denotes the axial/elongation strain, and $\sigma_{y,i},\sigma_{z,k}$ the shear strains. The overall configuration vector $\bm{q} \in \mathbb{R}^{n_\mathrm{q}}$ that describes the deviation from the reference strain $ \bar{\bm{\xi}} \in \mathbb{R}^{n_\mathrm{q}}$ is defined as
\begin{equation}
    \bm{q} = 
    \begin{bmatrix}
    \bm{\xi}_1^\top & \cdots & \bm{\xi}_N^\top
    \end{bmatrix}^\top - \bar{\bm{\xi}}
    \in \mathbb{R}^{n_\mathrm{q}},
\end{equation}
where $n_\mathrm{q} = 6N$ in case all strains are active.
The corresponding positional forward kinematics map
\(\bm{p} = \mathrm{FK}(\bm{q}, s): \mathbb{R}^{n_\mathrm{q}} \times (0,L] \to \mathbb{R}^3 \)
returns the position $\bm{p}$ at the backbone abscissa $s \in (0, L]$, where $L$ is the nominal/reference length of the soft robot.

Next, we introduce a model that relates the actuation (i.e., the tendons) to the soft robot strains/configurations.
\paragraph{Tendon Length Model}
Consider $m$ tendons placed on the cross-section at a constant radius $R$, each at a polar angle $\phi_j \in [-\pi, \pi)$ in the local cross-sectional y-z plane.
Then, the tendon distance from the backbone in the local frame takes the form $\bm{d}_{\mathrm{t},j} = [0, \cos(\phi_j) R_j, \sin(\phi_j) R_j ]^\top$. Let $\widehat{\cdot}: \mathbb{R}^6 \to \mathfrak{se}(3)$ denote the SE(3) hat/wedge operator and $\tilde{\cdot}: \mathbb{R}^3 \to \mathfrak{so}(3)$ the tilde operator returning the skew matrix~\cite{renda2020geometric}.
Then, the corresponding tangent vector $\bm{t}_{\mathrm{t},j,i}(\bm{q}_i)$ and the local actuation mapping $\bm{\Theta}_{\mathrm{t},j,i}(\bm{q}_i)$ of the $j$th tendon in the $i$th segment can be constructed as~\cite{renda2020geometric}
\begin{equation}
    \bm{t}_{\mathrm{t},j,i}(\bm{q}_i) = \frac{[\widehat{\bm{\xi}}_i \, \bm{d}_{\mathrm{t},j}]_3}{\lVert \widehat{\bm{\xi}}_i \, \bm{d}_{\mathrm{t},j} \rVert_2},
    \quad
    \bm{\Theta}_{\mathrm{t},j,i}(\bm{q}_i) = \begin{bmatrix}
        \tilde{\bm{d}}_{\mathrm{t},j} \, \bm{t}_{\mathrm{t},j,i}(\bm{q}_i)\\
        \bm{t}_{\mathrm{t},j,i}(\bm{q}_i)
    \end{bmatrix} \in \mathbb{R}^6.
\end{equation}
Referring to the length of the $i$th segment as $L_i$, the tendon length in that segment can be defined uniquely (neglecting tendon slack) as~\cite{pustina2024input}
\begin{equation}\label{eq:tendon_length}
    \ell_{j,i}(\bm{q}_i) 
    = \int_{0}^{L_i} \bm{\Theta}_{\mathrm{t},j,i}^\top(\bm{q}_i) \, \bm{\xi}_i  \, \mathrm{d}s = \bm{\Theta}_{\mathrm{t},j,i}^\top(\bm{q}_i) \, \bm{\xi}_i \, L_i \in \mathbb{R}.
\end{equation}
Referring to the full length of the $j$th tendon as $\ell_{j} = \sum_{i} \ell_{j,i} \in \mathbb{R}$, stacking the tendon lengths across all segments yields
\[
\bm{\ell}(\bm{q}) 
= \bigl[\ell_{1}(\bm{q}) \ \dots \ \ell_{m}(\bm{q})\bigr]^\top
\in \mathbb{R}^m,
\]
and differentiating w.r.t. $\bm{q}$ provides the tendon-actuation Jacobian $\bm{J}_{\bm{\ell}}(\bm{q}) = \frac{\partial \bm{\ell}(\bm{q})}{\partial \bm{q}} \in \mathbb{R}^{m \times n_\mathrm{q}}$ that maps generalized velocities to tendon velocities as $\dot{\bm{\ell}} = \bm{J}_{\bm{\ell}}(\bm{q}) \, \dot{\bm{q}}$, whose elements are given by~\cite{pustina2024input}
\begin{equation}
    \bm{J}_{\bm{\ell},j,i}(\bm{q}_i) = \int_{0}^{L_i} \bm{\Theta}_{\mathrm{t},j,i}^\top(\bm{q}_i) \, \mathrm{d}s = \bm{\Theta}_{\mathrm{t},j,i}^\top(\bm{q}_i) \, L_i.
\end{equation}
Please note that the actuation matrix $\bm{A} \in \mathbb{R}^{n_\mathrm{q} \times m}$ that maps tendon forces to generalized torques is the transpose of the tendon-actuation Jacobian, i.e., $\bm{A}(\bm{q}) = \bm{J}_{\bm{\ell}}^\top(\bm{q})$~\cite{pustina2024input}.



\paragraph{Differential Actuation Kinematics}
Let the control input be the tendon length rates $\bm{u} := \dot{\bm{\ell}} \in \mathbb{R}^m$. 
Defining the system state as $\bm{x} := \bm{q}$ with $n := n_\mathrm{q}$\footnote{Please note that we will in the following use $\bm{x}$ and $\bm{q}$ interchangeably for referring to the soft robot state.} while neglecting the inertial dynamics, we can describe the evolution of the configuration using the differential actuation kinematics
\begin{equation}\label{eq:control-affine-ode}
    \dot{\bm{q}} = \bm{J}_{\bm{\ell}}^+(\bm{q}) \, \dot{\bm{\ell}} = \underbrace{\bm{A}^{\top^+}(\bm{q})}_{g(\bm{x})} \, \bm{u}.
\end{equation}
The control-affine standard form of the ODE is then given by $\dot {\bm{x}} = f(\bm x) + g({\bm{x}})\bm{u}$ with $f(\bm x)=\bm{0}_n$, and $g:\mathbb{R}^m \rightarrow \mathbb{R}^{m \times d}$ is locally Lipschitz.

\subsection{Control Barrier Functions and Control Lyapunov Functions}
Given a barrier function $b(\bm{x}): \mathbb{R}^{n} \to \mathbb{R}$ and a Lyapunov function $V(\bm{x}): \mathbb{R}^{n} \to \mathbb{R}$, both continuous in the system state $\bm{x} \in \mathbb{R}^n$, one can define \ac{CBF} and \ac{CLF} as follows~\cite{ames2019control}
\begin{definition}[Control Barrier Function \cite{ames2019control, xiao2021high}]
    Let $C = \{\bm{x} \in \mathbb{R}^n:b(\bm{x}) \geq 0\}$ be the safe set, then the continuously differentiable function $b: \mathbb{R}^n \rightarrow \mathbb{R}$ is a \ac{CBF} for system~(\ref{eq:control-affine-ode}) if there exists a class $\mathcal{K}$ function $\alpha$ such that
    \begin{equation}
        \dot{b}(\bm{x})+\alpha(b(\bm{x})) \geq 0,
    \end{equation}
    for all $\bm{x}\in C$.
\label{def:cbf}
\end{definition}

\begin{definition}[Control Lyapunov Function \cite{ames2019control}]
A continuously differentiable function $V:\mathbb{R}^n \to \mathbb{R}$ 
is a globally and exponentially stabilizing \ac{CLF} for system~(\ref{eq:control-affine-ode})
if there exist constants $c_1>0$, $c_2>0$, and $c_3>0$ such that
\begin{gather}
    c_1 \|\bm{x}\|^2 \le V(\bm{x}) \le c_2 \|\bm{x}\|^2, \\
    \inf_{\bm{u}\in\mathcal{U}} [ L_f V(\bm{x}) + L_g V(\bm{x}) \bm{u} + c_3 V(\bm{x}) ] \le 0,
\end{gather}
for all $\bm{x} \in \mathbb{R}^n$.
\label{def:clf}
\end{definition}

The \ac{CBF}~\eqref{def:cbf} and \ac{CLF}~\eqref{def:clf} can be integrated into a convex \ac{QP} optimization problem:
\begin{equation}
\begin{gathered}
    \min_{\bm{u},\delta} \ \|\bm{u}\|_2^2 + p\delta^2,\\
    \text{s.t.} \ \dot{V}(\bm{x},\bm{u}) + c_3 V(\bm{x}) \le \delta, \\
    \dot{b}(\bm{x},\bm{u}) + \alpha(b(\bm{x})) \ge 0.
\label{eq:controlQPGeneral}
\end{gathered}
\end{equation}

To keep the \ac{QP} feasible when several \acp{CBF} and \acp{CLF} contradict each other, we usually add a non-negative slack $\delta \ge 0$ with penalty $p > 0$. Typically, \acp{CLF} capture performance objectives and \acp{CBF} capture safety and other constraints; assigning slack to lower priority terms let the controller trade performance for safety.

\section{Closed-Form Solution for CLF-CBF Controller}
This section presents a closed-form formulation of the CLF–CBF controller for whole-body soft robot collision avoidance via dense spatial discretizations, while relying on kinematic model rather than full continuum dynamics~\cite{della2023model}. By explicitly solving the underlying \ac{QP}, we obtain an closed-form control law that guarantees both safety and convergence according to the task objective. 

\subsection{Problem Formulation}
\subsubsection{Control Barrier Functions}
We consider a soft robotic arm operating within a three-dimensional workspace $\mathcal{W} \subset \mathbb{R}^3$, populated by $N_{\mathrm{obs}}$ spherical obstacles $\mathcal{W}_{\mathrm{obs}} = \{\mathcal{O}_{1}, \dots, \mathcal{O}_{N_{\mathrm{obs}}}\}$, which we assume to be known, static, centered at inertial frame position $\bm{p}_{\mathrm{obs},j}$ and exhibiting a radius of $R_{\mathrm{obs},j}$. Although other obstacle geometries (e.g., convex polygons~\cite{wong2025contact}) can be considered in future work, we focus on spheres for clarity of exposition. 

To provide full-body collision avoidance, we approximate the robotic arm the union of a discrete chain of $N_{\mathrm{res}}$ spheres $\mathcal{R} = (S_1, \dots, S_{N_{\mathrm{res}}})$, where each sphere $S_i$ with radius $R_{\mathrm{res},i}$ is centered at inertial frame position $\bm{p}_{\mathrm{res},i} = \operatorname{FK}(\bm{q},s_i) \in \mathbb{R}^3$ corresponding to a uniformly spaced backbone abscissa $s_i$ along the robot’s backbone curve and captures the local body geometry at that location. Thus, the overall robot body is approximated as $\bigcup_{i=1}^{N_{\mathrm{res}}} S_i$.

We can then detect collisions between the $N_{\mathrm{res}}$ spheres representing the robot and the $N_{\mathrm{obs}}$ obstacles in the environment. Let the configuration-dependent distance function $d_{i,j}(\bm{q}): \mathbb{R}^{n_\mathrm{q}} \to \mathbb{R}$ between the $i$th robot sphere $S_i$ and the $j$th obstacle $\mathcal{O}_j$ be defined as
\begin{equation}
    d_{i,j}(\bm{q}) = \lVert \bm{p}_{\mathrm{obs},j} - \operatorname{FK}(\bm{q},s_i) \rVert_2 - R_{\mathrm{obs},j} - R_{\mathrm{res},i}.
\end{equation}
Then, we require all pairwise distances to be greater than a certain safety margin $d_{\mathrm{safe}}$, namely,

{\small
\begin{equation}\label{eq:collisionCBF}
    b_{i,j}(\bm{q}) := d_{i,j}(\bm{q}) - d_{\mathrm{safe}} \ge 0, \,
    \forall \: i\in \mathbb{N}_{N_{\mathrm{res}}},\
    j \in \mathbb{N}_{N_{\mathrm{obs}}}.
\end{equation}
}


\subsubsection{Control Lyapunov Function}
As an example, we consider an \ac{CLF} for positional\footnote{Please note that this could be easily extended to (tip) orientation regulation~\cite{prakash2024deep}.} operational space setpoint regulation~\cite{wong2025contact}, that aims to drive the robot tip position $\bm{p}_{\mathrm{tip}}(\bm{q}) = \operatorname{FK}(\bm{q},L) \in \mathbb{R}^3$ towards a desired Cartesian position $\bm{p}_{\mathrm{tip}}^d \in \mathbb{R}^3$. The corresponding \ac{CLF} function is defined as
\begin{equation}
    V(\bm{q}) = \lVert \bm{p}_{\mathrm{tip}}^\mathrm{d} - \operatorname{FK}(\bm{q},L) \rVert_2^2.
\label{eq:tipCLF}
\end{equation}

\subsubsection{Quadratic Program}
Summarizing the above-specified safety barriers and task objectives, the \ac{QP} problem can be written as:
\begin{equation}
\begin{gathered}
    \min_{\bm{u},\,\delta}\ \|\bm{u}\|_2^2 + p \, \delta^2, \\
    \text{s.t.}\ \dot{V}(\bm{x},\bm{u}) + c_3 V(\bm{x}) \le \delta, \\
    \dot{b}_{i,j}(\bm{x},\bm{u}) + \alpha(b_{i,j}(\bm{x})) \ge 0,\ \forall i \in \mathbb{N}_{N_{\mathrm{res}}}, \: j \in \mathbb{N}_{N_{\mathrm{obs}}},
\label{eq:controlQP}
\end{gathered}
\end{equation}
where the number of constraints without further heuristic approximations is given by the product $N_{\mathrm{res}} \, N_{\mathrm{obs}}$; the higher the collision-avoidance resolution, the $\mathcal{O}(N_{\mathrm{obs}})$ more constraints the \ac{QP} must consider.

\subsection{Aggregation of CBFs}
To handle multiple safety constraints efficiently in a closed-form expression, we first aggregate all pairwise distance functions $b_{i,j}(\bm{x})$ into a \emph{single} smooth barrier function using the \ac{LSE} approximation~\cite{molnar2023composing}. Intuitively, we are now considering the \ac{CBF} as the characteristic distance describing the minimum distance between the soft robot and all obstacles in the environment
\begin{equation}
    b_{\mathrm{LSE}}(\bm{x}) = -\frac{1}{\kappa}
    \log\!\left(
        \sum_{i=1}^{N_{\mathrm{res}}}
        \sum_{j=1}^{N_{\mathrm{obs}}}
        \exp\!\left[-\kappa\, b_{i,j}(\bm{x})\right]
    \right).
\label{eq:h_lse}
\end{equation}
The parameter $\kappa > 0$ trades off smoothness and responsiveness: larger $\kappa$ emphasizes the most critical obstacle, while smaller $\kappa$ yields smoother yet more conservative avoidance.  

\begin{lemma}[LSE barrier implies all pairwise barriers]
\label{lem:lse_implies_all}
Let $\{b_{i,j}(\bm{x})\}_{i \in \mathbb{N}_{N_{\mathrm{res}}}; j \in \mathbb{N}_{N_{\mathrm{obs}}}}$ be real-valued functions and define the Log-Sum-Exp aggregation
\begin{equation*}
    b_{\mathrm{LSE}}(\bm{x})
    :=
    -\frac{1}{\kappa}
    \log \!\left(
        \sum_{i=1}^{N_{\mathrm{res}}}
        \sum_{j=1}^{n_{\mathrm{obs}}}
        \exp\!\left[-\kappa\, b_{i,j}(\bm{x})\right]
    \right),
    \quad \kappa>0.
\end{equation*}
If $b_{\mathrm{LSE}}(\bm{x}) \ge 0$, then $b_{i,j}(\bm{x}) \ge 0$ for all $i,j$.
\end{lemma}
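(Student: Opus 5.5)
The plan is to show that the Log-Sum-Exp aggregation is a lower bound on every individual pairwise barrier, i.e.\ $b_{\mathrm{LSE}}(\bm{x}) \le b_{i,j}(\bm{x})$ for every pair $(i,j)$. Nonnegativity of $b_{\mathrm{LSE}}$ then transfers directly to each $b_{i,j}$. This is the standard ``soft-min'' property of LSE, and it needs nothing from the earlier sections beyond the definition of $b_{\mathrm{LSE}}$ and $\kappa>0$.

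\emph{Step 1: bound the sum below by one of its terms.} Fix any $\bm{x}$ and any index pair $(i^\ast,j^\ast)$. Every summand $\exp[-\kappa b_{i,j}(\bm{x})]$ is strictly positive. Hence the double sum $S(\bm{x}) := \sum_{i}\sum_{j}\exp[-\kappa b_{i,j}(\bm{x})]$ satisfies $S(\bm{x}) \ge \exp[-\kappa b_{i^\ast,j^\ast}(\bm{x})] > 0$. In particular the logarithm is well defined.

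\emph{Step 2: pass through the logarithm and the factor $-1/\kappa$.} Since $\log$ is strictly increasing, $\log S(\bm{x}) \ge -\kappa b_{i^\ast,j^\ast}(\bm{x})$. Multiplying by $-1/\kappa<0$ reverses the inequality and gives
\begin{equation*}
b_{\mathrm{LSE}}(\bm{x}) = -\tfrac{1}{\kappa}\log S(\bm{x}) \le b_{i^\ast,j^\ast}(\bm{x}).
\end{equation*}
Because $(i^\ast,j^\ast)$ was arbitrary, this yields $b_{\mathrm{LSE}}(\bm{x}) \le \min_{i,j} b_{i,j}(\bm{x})$.

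\emph{Step 3: conclude.} If $b_{\mathrm{LSE}}(\bm{x})\ge 0$, then $0 \le b_{\mathrm{LSE}}(\bm{x}) \le b_{i,j}(\bm{x})$ for all $i,j$, which is the claim.

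There is no genuine obstacle here. The only points needing care are the sign flip when multiplying by $-1/\kappa$, which requires $\kappa>0$, and the positivity of the sum, which guarantees the logarithm is finite. As a sanity check, the complementary bound $b_{\mathrm{LSE}} \ge \min_{i,j} b_{i,j} - \tfrac{1}{\kappa}\log(N_{\mathrm{res}} N_{\mathrm{obs}})$ explains the remark that larger $\kappa$ makes the aggregate less conservative. That bound is not needed for the lemma.
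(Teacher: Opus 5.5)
Your proof is correct and is essentially the paper's argument: the paper bounds the sum below by the single term at the minimizing index $\underline{b}(\bm{x}) = \min_{i,j} b_{i,j}(\bm{x})$ and applies the order-reversing map $-\frac{1}{\kappa}\log(\cdot)$, exactly as you do with an arbitrary index pair. You also note that positivity of the summands makes the logarithm well defined, a point the paper leaves implicit.
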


\begin{proof}
Let $\underline{b}(\bm{x}) := \min_{i,j} b_{i,j}(\bm{x})$. Then
\[
\sum_{i,j} \exp\!\left[-\kappa\, b_{i,j}(\bm{x})\right]
\;\ge\; \exp\!\left[-\kappa\, \underline{b}(\bm{x})\right].
\]
Applying $-\frac{1}{\kappa}\log(\cdot)$, which is order-reversing, yields
{\small
\begin{equation}
    b_{\mathrm{LSE}}(\bm{x})
    = -\frac{1}{\kappa}\log \sum_{i,j} e^{-\kappa b_{i,j}(\bm{x})}
    \;\le\; -\frac{1}{\kappa}\log e^{-\kappa \underline{b}(\bm{x})}
    = \underline{b}(\bm{x}).
\end{equation}
}
Hence $\underline{b}(\bm{x}) \ge b_{\mathrm{LSE}}(\bm{x}) \ge 0$, which implies $b_{i,j}(\bm{x}) \ge 0$ for all $i \in \mathbb{N}_{N_{\mathrm{res}}}; j \in \mathbb{N}_{N_{\mathrm{obs}}}$.
\end{proof}

\subsection{Closed-form Solution Derivation}
The optimization problem in \eqref{eq:controlQP} provides a general and flexible framework for combining stability and safety objectives. 
However, solving the \ac{QP} at every control step can be computationally expensive and challenging when dealing with high-dimensional dynamics and many nonlinear safety constraints. To address this, we derive in closed form the optimal control input $\bm{u}^*$ that minimizes the objective while satisfying both \ac{CLF} and \ac{CBF} constraints.

Using $b_{\mathrm{LSE}}$ as the unified safety constraint, the CLF–CBF QP in~\eqref{eq:controlQP} can be further simplified to a two-constraint form with a \emph{relaxed CLF constraint}:
\begin{equation}
\begin{gathered}
    \min_{\bm{u},\,\delta}\ \|\bm{u}\|_2^2 + w_{\mathrm{clf}} \, \delta^2, \\
    \text{s.t.}\ \dot{V}(\bm{x},\bm{u}) + c_3 V(\bm{x}) \le \delta, \\
    \dot{b}_{\mathrm{LSE}}(\bm{x},\bm{u}) + \alpha\big(b_{\mathrm{LSE}}(\bm{x})\big) \ge 0,\\
    \delta \ge 0.
\label{eq:qp_lse_relax}
\end{gathered}
\end{equation}

Let $\bm{a}_V^\top \bm{u} + b_V \le \delta$ and $\bm{a}_h^\top \bm{u} + b_h \ge 0$ denote the CLF and CBF constraints, respectively, where
\begin{gather*}
\bm{a}_V = L_g V(\bm{x})^\top,\quad b_V = c_3 V(\bm{x}),\\
\bm{a}_h = L_g b_{\mathrm{LSE}}(\bm{x})^\top,\quad b_h = \alpha\big(b_{\mathrm{LSE}}(\bm{x})\big).
\end{gather*}

\paragraph{Lagrangian and KKT conditions \cite{boyd2004convex}}  
Introduce multipliers $\lambda_V,\lambda_h,\lambda_\delta \ge 0$ for the CLF, CBF, and slack constraints, respectively.  
The Lagrangian is
\begin{align}
\mathcal{L}(\bm{u},\delta,\lambda_V,\lambda_h,\lambda_\delta)
&= \|\bm{u}\|_2^2 + w_{\mathrm{clf}}\delta^2 \notag + \lambda_V(\bm{a}_V^\top \bm{u} + b_V - \delta) \notag\\
&\quad + \lambda_h(-\bm{a}_h^\top \bm{u} - b_h) 
+ \lambda_\delta(-\delta).
\end{align}

Taking derivatives gives the KKT stationarity conditions:
\begin{equation}
\begin{gathered}
\partial_{\bm{u}}\mathcal{L} = 0 
\ \Rightarrow\ 
2\bm{u}^* + \lambda_V \bm{a}_V - \lambda_h \bm{a}_h = 0,\\
\partial_{\delta}\mathcal{L} = 0 
\ \Rightarrow\ 
2w_{\mathrm{clf}}\delta^* - \lambda_V - \lambda_\delta = 0.   
\label{eq:first-order}
\end{gathered}    
\end{equation}

Complementary slackness yields
{\small
\begin{equation}
\begin{gathered}
\lambda_V(\bm{a}_V^\top \bm{u}^* + b_V - \delta^*) = 0,
\:
\lambda_h(-\bm{a}_h^\top \bm{u}^* - b_h) = 0,
\:
\lambda_\delta\,\delta^* = 0.
\label{eq:slack}
\end{gathered}
\end{equation}
}

\paragraph{Role of the slack variable}  
From the second stationarity condition in \cref{eq:first-order}, and third condition in \cref{eq:slack}:
\begin{itemize}
    \item If $\delta^* > 0$, then $\lambda_\delta = 0$ and $\lambda_V = 2w_{\mathrm{clf}}\delta^*$.
    \item If $\delta^* = 0$, then $\lambda_\delta = \lambda_V$.
\end{itemize}
Thus $\delta^*$ is absorbed into $\lambda_V$, i.e., it only modulates the magnitude of the \ac{CLF} multiplier and does not directly change the control direction.

\paragraph{Stationarity in $\bm{u}$.}  
From $\partial_{\bm{u}}\mathcal{L}=0$:
\[
\bm{u}^* = -\tfrac12\bigl(\lambda_V \bm{a}_V - \lambda_h \bm{a}_h\bigr).
\]
This leads directly to the piecewise closed-form structure depending on which constraints are active:
\begin{equation} 
\bm{u}^* = 
\begin{cases} 
-\dfrac{1}{2}(\lambda_V \bm{a}_V - \lambda_h \bm{a}_h), & \lambda_V, \lambda_h >0,\\[6pt] 
-\dfrac{1}{2}\lambda_V \bm{a}_V, & \lambda_V>0, \lambda_h =0,\\[6pt] 
+\dfrac{1}{2}\lambda_h \bm{a}_h, & \lambda_h >0 , \lambda_V =0,\\[6pt] 
\bm{0}, & \lambda_V, \lambda_h =0.
\end{cases} 
\label{eq:u_closed_relax}
\end{equation}
where $\lambda_V = 2w_{\mathrm{clf}}\delta^*$ and $\lambda_h \ge 0$ are determined by the complementary slackness conditions.

\paragraph{Solving for the multipliers}  
When both constraints are active, substituting \eqref{eq:u_closed_relax} into the active constraints yields the linear system
\begin{equation}
\begin{bmatrix}
G_{11} + 1/w_{\mathrm{clf}} & G_{12}\\
G_{21} & G_{22}
\end{bmatrix}
\begin{bmatrix}\lambda_V\\ \lambda_h\end{bmatrix}
= -2\begin{bmatrix}b_V\\ b_h\end{bmatrix},
\end{equation}
with
\begin{equation*}
G_{11}=\bm{a}_V^\top \bm{a}_V,\quad 
G_{22}=\bm{a}_h^\top \bm{a}_h,\quad
G_{12}=G_{21}=\bm{a}_V^\top \bm{a}_h.    
\end{equation*}

The additional term $1/w_{\mathrm{clf}}$ arises from eliminating the slack variable through $\lambda_V = 2w_{\mathrm{clf}}\delta^*$. Let
\(
\Delta = (G_{11}+1/w_{\mathrm{clf}})G_{22} - G_{12}^2.
\)
If $\Delta > 0$, the closed-form solution is
\begin{equation}
\begin{gathered}
\lambda_V = \frac{-2\bigl(G_{22}b_V - G_{12}b_h\bigr)}{\Delta},\\
\lambda_h = \frac{-2\bigl((G_{11}+1/w_{\mathrm{clf}})b_h - G_{12}b_V\bigr)}{\Delta}.    
\end{gathered}    
\end{equation}

Negative multipliers are clipped to zero to enforce dual feasibility, which corresponds to deactivating the corresponding constraint:
\begin{equation}
\begin{gathered}
\lambda_V = \max\Big(0,\,-\frac{2b_V}{G_{11}+1/w_{\mathrm{clf}}}\Big),\\
\lambda_h = \max\Big(0,\,-\frac{2b_h}{G_{22}}\Big).   
\end{gathered}
\end{equation}
Finally, the CLF slack is recovered as
$\delta^* = \lambda_V/(2w_{\mathrm{clf}})$.

\paragraph{Remarks}
The key difference from the hard-\ac{CLF}~\cite{xiaoabnet} case is the presence of the slack variable, which softens the convergence constraint but does not affect the control direction explicitly. Its effect is entirely captured through $\lambda_V$, and the resulting Gram system remains $2\times 2$ with a simple closed-form solution. The \ac{CBF} is always strictly enforced, ensuring safety, while the \ac{CLF} can be relaxed to maintain feasibility near constraint boundaries. Moreover, the \ac{LSE} aggregation ensures smooth transitions when multiple obstacles are present, avoiding discontinuous active-set switching.

\begin{figure}[t]
    \centering
    \begin{subfigure}[b]{0.48\columnwidth}
        \centering
        \includegraphics[width=\linewidth]{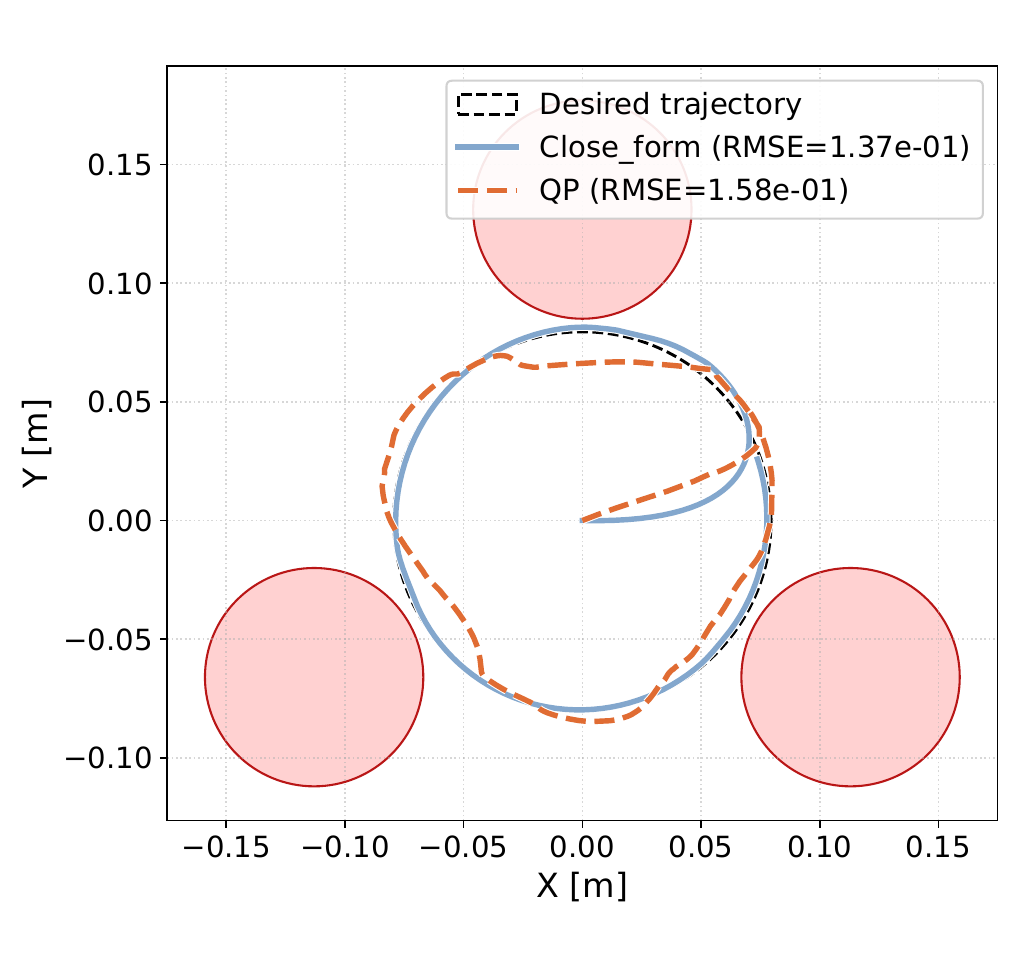}
        \caption{Speed: \SI{3}{RPM}, obstacles placed outside the trajectory}
        \label{fig:NC_20_loose}
    \end{subfigure}
    \hfill
    \begin{subfigure}[b]{0.48\columnwidth}
        \centering
        \includegraphics[width=\linewidth]{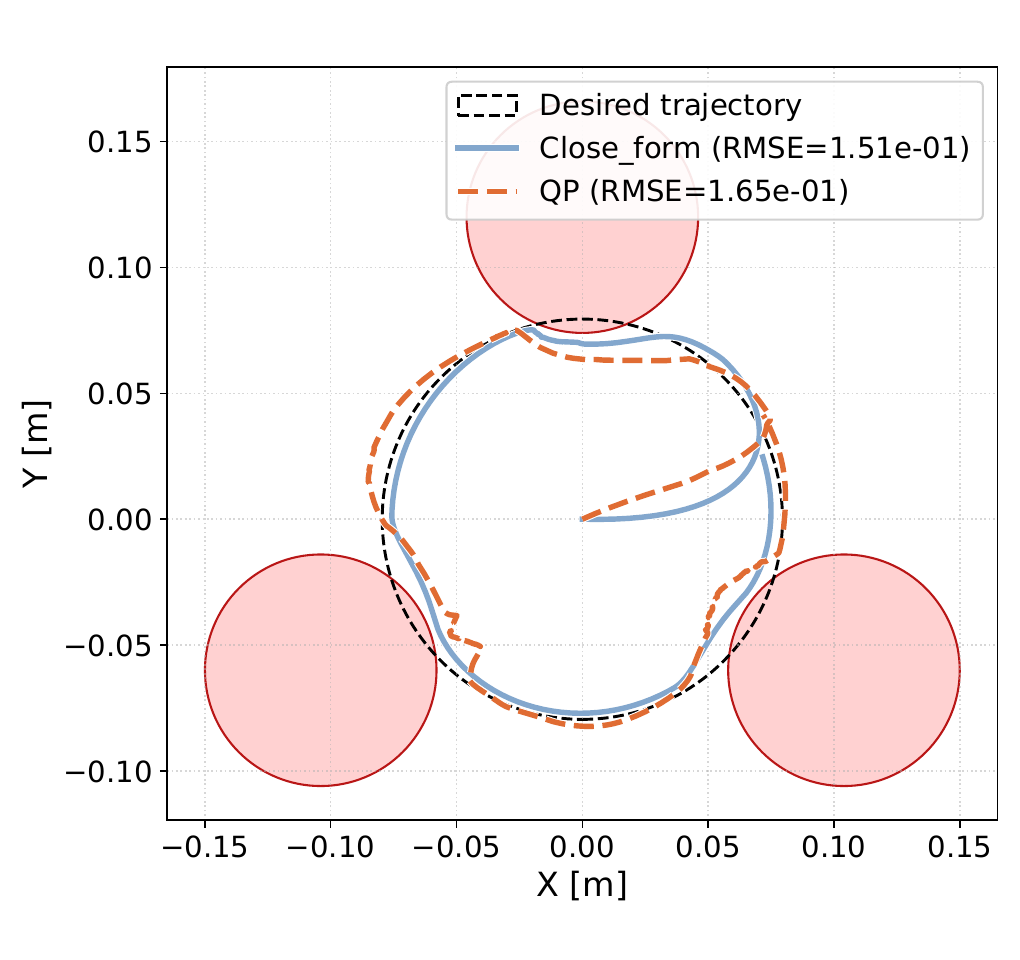}
        \caption{Speed: \SI{3}{RPM}, obstacles placed inside the trajectory}
        \label{fig:NC_20_tight}
    \end{subfigure}
    \begin{subfigure}[b]{0.48\columnwidth}
        \centering
        \includegraphics[width=\linewidth]{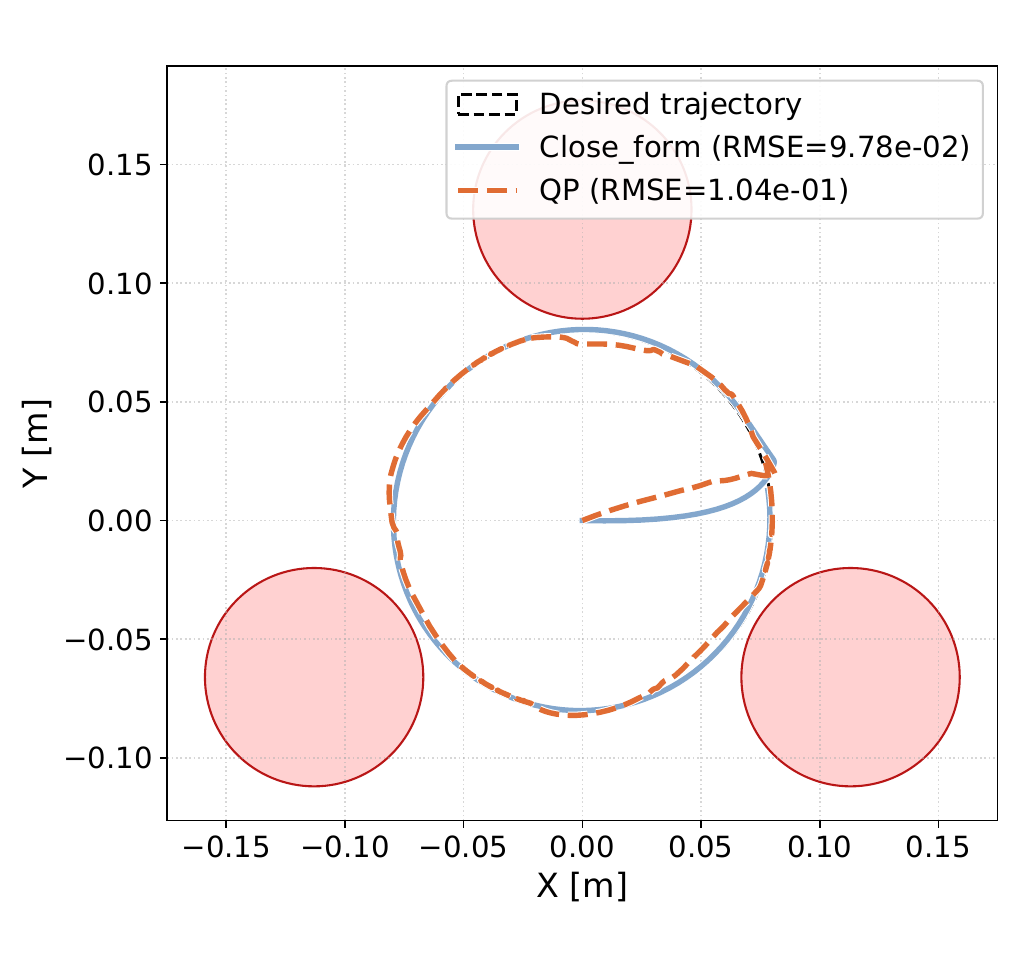}
        \caption{Speed: \SI{1.5}{RPM}, obstacles placed outside the trajectory}
        \label{fig:NC_40_loose}
    \end{subfigure}
    \hfill
    \begin{subfigure}[b]{0.48\columnwidth}
        \centering
        \includegraphics[width=\linewidth]{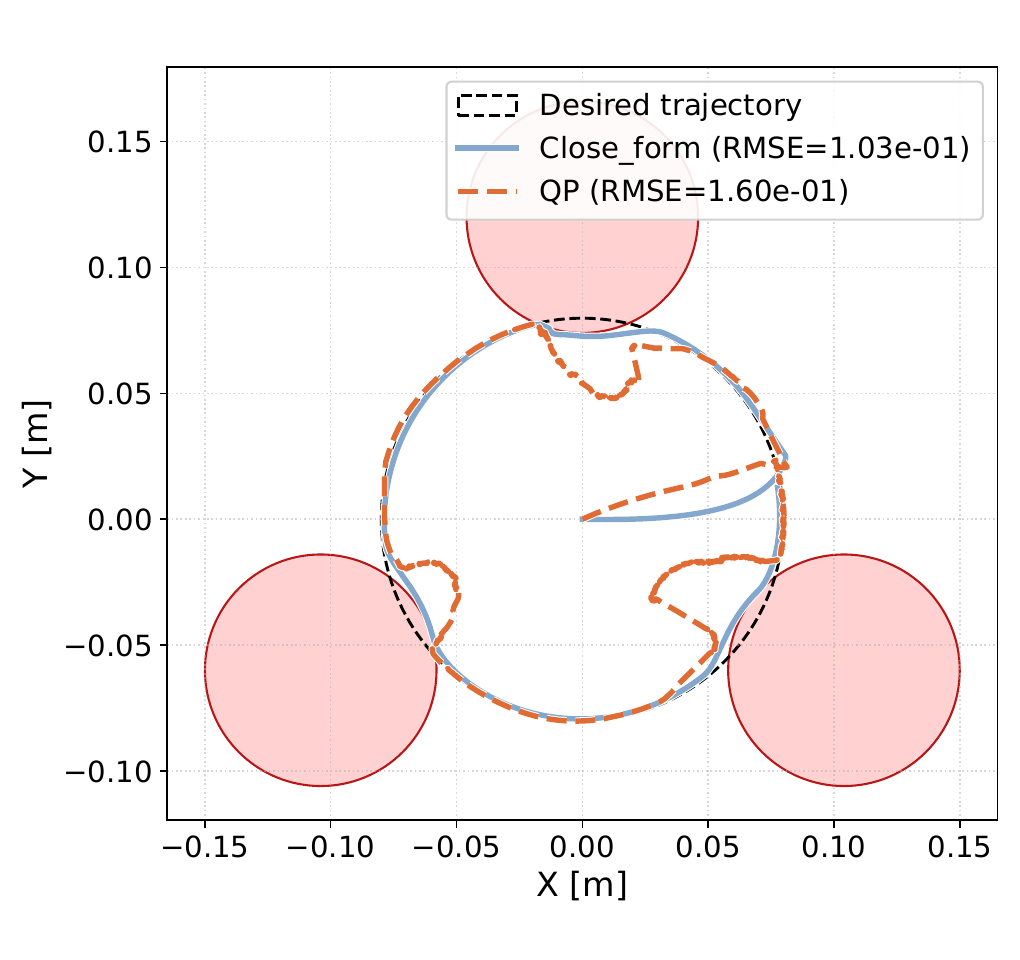}
        \caption{Speed: \SI{1.5}{RPM}, obstacles placed inside the trajectory}
        \label{fig:NC_40_tight}
    \end{subfigure}
        \caption{Simulation results. 
        Circular trajectory tracking pattern in 2D. 
        The trajectories represent the motion path of the soft robot tip center, while the red-shaded region indicates the collision area. 
        The RMSE of each case is shown in the legend, consistently demonstrating that the closed-form controller achieves more accurate and smoother trajectories than the baseline QP controller.
    }
    \label{fig:Comparison_Simulation}
\end{figure}

\section{Simulation Experiments}
In this section, we thoroughly evaluate in simulation how the proposed controller scales with the number of safety constraints and DOFs of the soft robot, we benchmark the computational time and numerical accuracy against a standard QP solver, and we perform setpoint regulation and trajectory tracking experiments, where we compare the behavior of the closed-loop system under the proposed closed-form CLF-CBF controller against an RRT* baseline strategy and, again, the same CLF-CBF solved using a standard QP solver.

\subsection{Simulation Setup}
We build our closed-form CLF-CBF controller on the \texttt{CBFpy}~\cite{morton2025oscbf} package that offers an intuitive user interface and high-performance auto-differentiation for CLF-CBF controller computation based on JAX. Our simulations consider an $N$-segment \ac{PCS}~\cite{renda2018discrete} soft robot operating in 3D space, approximated with $20$ spheres per segment for collision detection, and implemented in a differentiable fashion in the \texttt{SoRoMoX}~\cite{soromox2026} package. The soft robot has a total length of $L = \SI{0.3}{m}$ and a backbone radius of \SI{0.036}{m}. We assume that three tendons terminate at the end of each segment (e.g., $m=6$ for $N=2$) and that they are symmetrically distributed along the cross-section with $\phi_j = \frac{2}{3} \pi \, j$. The kinematic closed-loop ODE is integrated with a numerical solver implementing Tsitouras' 5/4 method with time step $\delta t = \SI{1}{ms}$. The safety margin, $d_{\mathrm{safe}}$ is set at \SI{0}{m}. We do not consider input constraints.

\subsection{Numerical Validation and Benchmarking of Closed-Form Solution against QP Solver}
First, we focus on verifying numerically the closed-form solution with respect to solving the \ac{QP} via the primal-dual interior point method.
We benchmark a two-constraint \ac{QP}
\begin{gather*}
\min_{\bm{u}} \ \|\bm{u}\|_2^2
\quad \text{s.t.}\quad A\bm{u} \le b,\qquad A\in\mathbb{R}^{2\times m},
\end{gather*}
where the slack variable $\delta = 0$:

An \emph{closed-form} solution (derived from KKT conditions) is compared against a JIT-compiled primal-dual interior point solver for convex \acp{QP} (\texttt{qpax}~\cite{tracy2024differentiability}) that we regard as a numerical reference.  
All methods are implemented in \texttt{JAX} and evaluated on CPU with double-precision floating point operations (\texttt{float64}).  
Each measured timing is averaged over $\si{10000}$ calls after JIT warm-up.

\begin{table}[tb]
\caption{Runtime and numerical accuracy for the two-constraint QP. The Closed-form method denotes the closed-form solution derived from KKT conditions, and the QP solver refers to the JIT-compiled numerical optimizer (\texttt{qpax}). 
Metrics are defined as follows: 
$\|\bm{u}_c-\bm{u}_{\text{qp}}\|_\infty$ and $\|\bm{u}_c-\bm{u}_{\text{qp}}\|_2$ measure the maximum and Euclidean norm of the solution difference, 
$|f(\bm{u}_c)-f(\bm{u}_{\text{qp}})|$ is the objective gap, 
and $\max(A\bm{u}-b,0)$ quantifies constraint violation. 
All results show machine-precision agreement between the closed-form and numerical solutions, 
with the closed-form approach achieving over an order-of-magnitude faster runtime.}
\begin{tiny}
\label{tab:hi-noq-bench}
\setlength{\tabcolsep}{2.5pt}
\renewcommand{\arraystretch}{1.05}
\centering
\footnotesize
\begin{tabular}{@{}lcc@{}}
\toprule
\textbf{Methods} & Runtime [$\mu$s/call] & Rel. Speedup \\
\midrule
QP solver (\texttt{qpax}, JIT)   & 45.53 & $1$x (baseline) \\
Closed-form & 4.22 & $10.8$x faster \\
\bottomrule
\end{tabular}

\vspace{0.6em}

\begin{tabular}{@{}lcccc@{}}
\toprule
\textbf{Numerical Accuracy} & Mean & Med. & 95th & Max \\
\midrule
$\|\bm{u}_c - \bm{u}_{\text{qp}}\|_\infty$
  & $5.2{\times}10^{-11}$ & $2.7{\times}10^{-12}$ & $8.7{\times}10^{-11}$ & $5.8{\times}10^{-9}$ \\
$\|\bm{u}_c - \bm{u}_{\text{qp}}\|_2$
  & $6.5{\times}10^{-11}$ & $3.8{\times}10^{-12}$ & $1.5{\times}10^{-10}$ & $6.3{\times}10^{-9}$ \\
$|f(\bm{u}_c)-f(\bm{u}_{\text{qp}})|$
  & $9.6{\times}10^{-13}$ & $1.6{\times}10^{-13}$ & $6.2{\times}10^{-12}$ & $1.2{\times}10^{-11}$ \\
$\max(A\bm{u}_c-b,0)$
  & $6.9{\times}10^{-18}$ & $0$ & $5.6{\times}10^{-17}$ & $5.6{\times}10^{-17}$ \\
$\max(A\bm{u}_{\text{qp}}-b,0)$
  & $1.1{\times}10^{-13}$ & $0$ & $1.1{\times}10^{-12}$ & $1.7{\times}10^{-12}$ \\
\bottomrule
\end{tabular}
\end{tiny}
\end{table}

The closed-form method achieves machine-precision agreement with the numerical solver while being more than an order of magnitude faster in steady-state runtime.  
No constraint violation is observed, and all $200/200$ active sets match exactly at tolerance $\tau=10^{-6}$, confirming the correctness and robustness of the closed-form solution.

\begin{figure}[htbp]
    \centering
    \subcaptionbox{Approximation error between the discretized and continuous robot body, 
        measured using the symmetric Hausdorff distance as the number of 
        discretized spheres $N_{\mathrm{res}}$ increases.%
        \label{fig:shape_error_vs_resolution}}[0.48\linewidth]{
        \includegraphics[width=\linewidth]{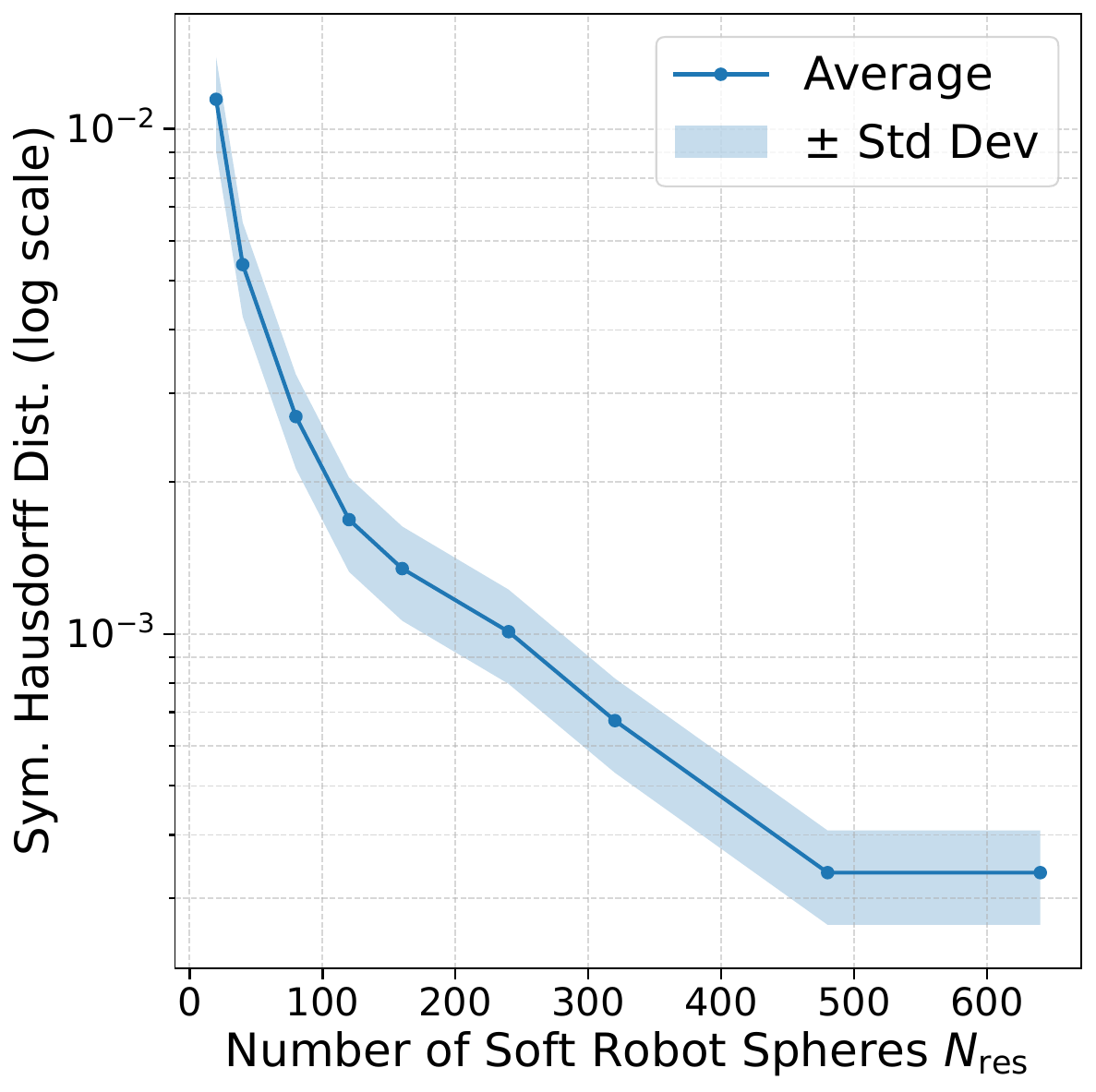}
    }
    \hfill
    \subcaptionbox{ Average computation time of the controller with increasing resolution.
        Closed-form CLF–CBF QP is significantly faster than the generic QP solver.%
        \label{fig:efficiency_vs_pps}}[0.48\linewidth]{
        \includegraphics[width=\linewidth]{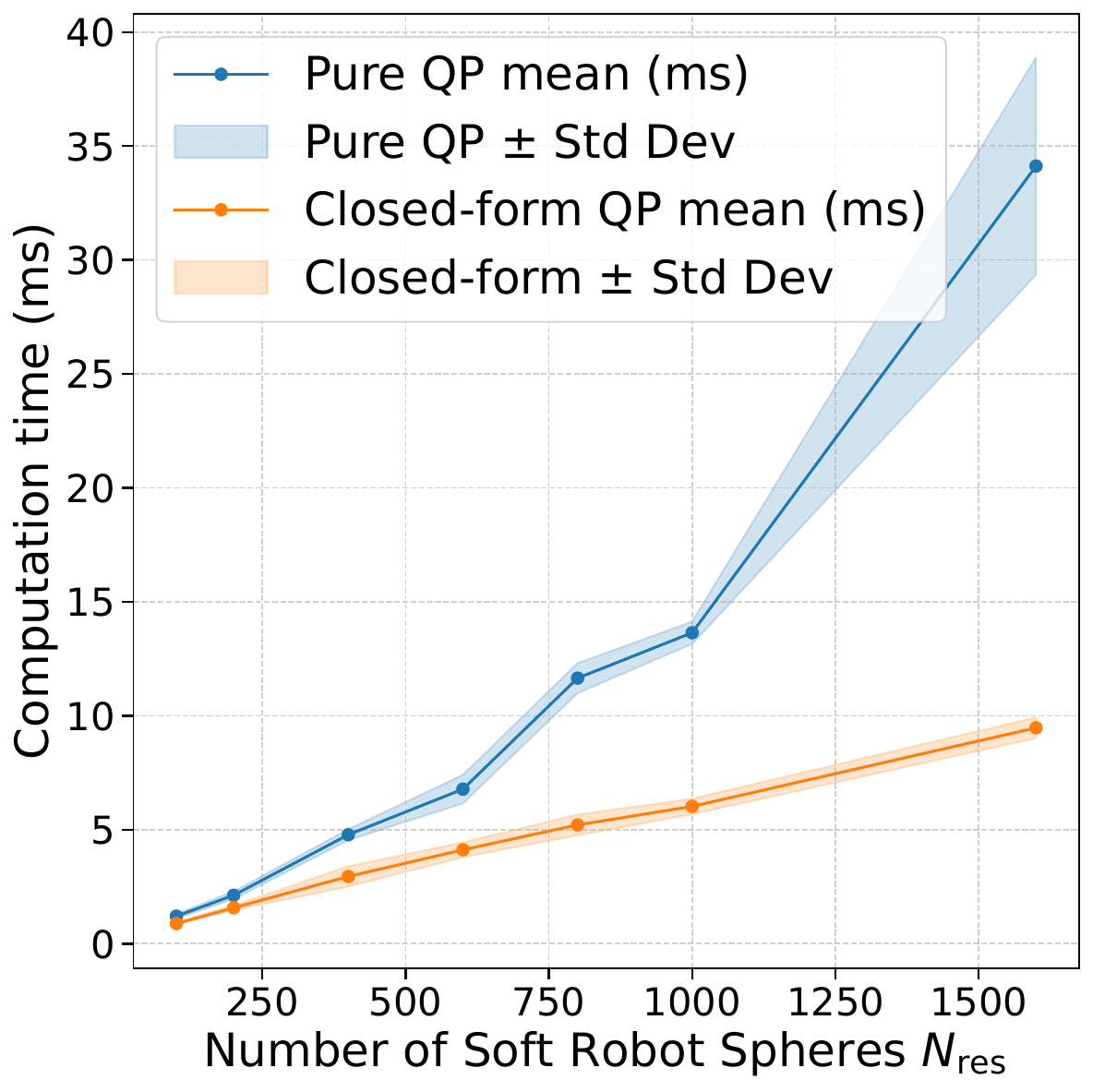}
    }\hfill
    
    \caption{Quantitative evaluation of body discretization and controller efficiency. 
        (a) Increasing $N_{\mathrm{res}}$ improves geometric fidelity, 
        with rapidly diminishing Hausdorff error after $\sim 400$ spheres. 
        (b) The closed-form CLF–CBF solver maintains low computation time 
        even at high discretization resolutions, in contrast to the generic QP solver.}
    \label{fig:discretization_performance}
\end{figure}

\subsection{Geometrical Approximation of Soft Robot Body and Scaling of Computational Time with Resolution}

We evaluate how the discretization resolution of the soft robot affects both geometric fidelity and controller computational efficiency, as reported in \Cref{fig:discretization_performance}. Specifically, we first simulate a trajectory for the robot and randomly select configurations for evaluation. Subsequently, we measure the mismatch between the actual soft robot geometry and the geometry for collision detection, approximated with a chain of spheres, using the symmetric Hausdorff distance. Indeed, it is shown how increasing the resolution $N_{\mathrm{res}}$ improves the accuracy of the geometric approximation of the robot’s shape and decreases the symmetric Hausdorff distance.
Furthermore, we evaluate the computational time necessary for one controller evaluation and compare solving the QP online using the \texttt{CBFpy}~\cite{morton2025oscbf} with our proposed closed-form solution. We observe that the closed-form solution maintains significantly lower computational time as $N_{\mathrm{res}}$ increases, demonstrating its advantages with regard to scalability and efficiency when increasing the number of safety constraints.

\subsection{Baseline: RRT* in Configuration-Space}
As we discussed in the introduction, currently, sampling-based path planning methods, such as RRT variants, are the most widely established approaches for obstacle avoidance with continuum soft robots~\cite{torres2015motion, kuntz2017motion, meng2022rrt, li2024s, lang2025obstacle}.
Therefore, we adapt as a baseline an RRT*~\cite{karaman2011sampling} path planner that operates in configuration-space, checks for collisions, and returns a sequence of kinematic configurations that move the robot towards the target end-effector position and that are tracked by a kinematic low-level controller. Please note that we had to neglect shear and twist strains to keep the dimensionality of the planning space manageable. 
More sophisticated baselines (e.g., MPC-based controllers) could improve tracking performance, but typically lack formal safety guarantees or require online optimization.

Our implementation is similar to the method proposed by Kuntz \textit{et al.}~\cite{kuntz2017motion}, apart from that we use RRT*~\cite{karaman2011sampling} instead of vanilla RRT~\cite{lavalle2001rapidly}.
Specifically, we implement a RRT* variant that operates in an affine‑normalized configuration space: the robot’s physical joint/strain bounds are mapped to $[0,1]^{n_\mathrm{q}}$ so that sampling, nearest‑neighbor queries, and distances are well‑conditioned across heterogeneous units. Sampling is uniform in this normalized space, edges are grown with a fixed step parameter ($q=0.02$), and collisions are screened along each edge at a finer stride ($r=0.01$) by evaluating a control barrier function $b(\bm{q}) \ge 0$ on denormalized configurations; Goal progress is driven via the \ac{CLF} $V(\bm{q})$; planning terminates once $V<\SI{0.01}{m}$. We enable rewiring ($k=32$ neighbors) to improve path cost, and cap iterations at $\SI{20480}{}$ samples. If no node satisfies the threshold within the budget, we return the path to the node that attained the smallest $V(\bm{q})$ (closest approach).

Subsequently, this sequence of configurations is tracked by a kinematic controller that drives the system towards the next configuration setpoint $\bm{q}^\mathrm{d} \in \mathbb{R}^{n_\mathrm{q}}$
\begin{equation}
    \bm{u}(t) = \dot{\bm{\ell}}(t) = \bm{J}_{\bm{\ell}}(\bm{q}) \, \bm{K}_\mathrm{p} \left ( \bm{q}^\mathrm{d}(t) - \bm{q}(t) \right ),
\end{equation}
where $\bm{K}_\mathrm{p} \succ 0 \in \mathbb{R}^{n_\mathrm{q} \times n_\mathrm{q}}$ determines the proportional feedback gains. In our simulations, we use $\bm{K}_\mathrm{p} = 2 \, \mathbb{I}_{n_\mathrm{q}}$.
As soon as the system has reached a tight region around the setpoint, or \SI{4}{s} have passed, the next setpoint from the planned path is provided to the controller.

Please note that a cascaded scheme—path planner plus safety-unaware low-level controller—exhibits safety gaps: collision freedom holds only under the planner’s heuristic of linear interpolation in configuration space between setpoints/nodes. In practice, the closed-loop motion deviates between setpoints, so collisions aren’t guaranteed to be avoided.

\subsection{Setpoint Regulation}
We design a setpoint regulation scenario, with shearing strain active, in which a two-segment ($N=2$) soft robot must navigate in an environment with multiple obstacles while attempting to reach a fixed target at $(0.10, 0.05, 0.32)$. The spherical obstacles are placed at
$(0.10, 0.08, 0.24)$~m, $(0.12, 0.06, 0.32)$~m, and $(0.04, 0.055, 0.20)$~m, and each has a radius of \SI{0.02}{m}. 
In this scenario, the controller must ensure that the robot approaches the target as closely as possible while strictly maintaining a minimum clearance from all obstacles. \cref{fig:simulation_setpoint_regulation_clf_cbf_values} plots the \ac{CLF} value $V(\bm{q}(t))$ and the minimum of the barrier functions $\min_{i \in \mathbb{N}_\mathrm{res}, j \in \mathbb{N}_\mathrm{obs}} b_{i,j}(\bm{q}(t))$ over the simulation time both for the RRT* baseline and the proposed closed-form CLF-CBF controller.
\cref{fig:simulation_setpoint_regulation_stills} contains a sequence of stills illustrating how the controller enables the robot to approach the target while maintaining safe distances from the obstacles, and \cref{fig:simulation_setpoint_regulation_rrtstar_ee_time_evolution} plots the time evolution of the end effector position for the RRT* baseline.
These simulation results motivate several advantages of the proposed CLF-CBF controller against established obstacle avoidance strategies relying on a cascaded scheme of a sampling-based planner with a low-level controller: (i) the proposed strategy exhibits a smoother and monotone convergence towards the goal as it does not rely on discrete intermediate setpoints, (ii) the RRT* baseline identifies because of the limited sampling budget as less optimal path with a higher final \ac{CLF} value, and (iii) while a thorough comparison of the computational time between the proposed CLF-CBF controller and the RRT* baseline is not straightforward (e.g., there exists a tradeoff between planning time and optimality; RRT* planning needs to be done once before the start of the motion for a given set of obstacles and a given goal, while the proposed controller does not rely on \emph{a priori} planning) and, therefore, out of scope for this paper, we observe that the RRT* planning takes \SI{244}{s} on an 12th-gen Intel i9 CPU with 24 cores and \SI{64}{GB} of RAM, which is orders of magnitudes more expensive than the proposed closed-form strategy.

\begin{figure}
    \centering
     \centering
    \subcaptionbox{RRT* + Low-Level Controller (Baseline)\label{fig:simulation_setpoint_regulation_clf_cbf_values:rrt_star}}[0.48\linewidth]{
        \includegraphics[width=\linewidth, trim={5 5 5 5}]{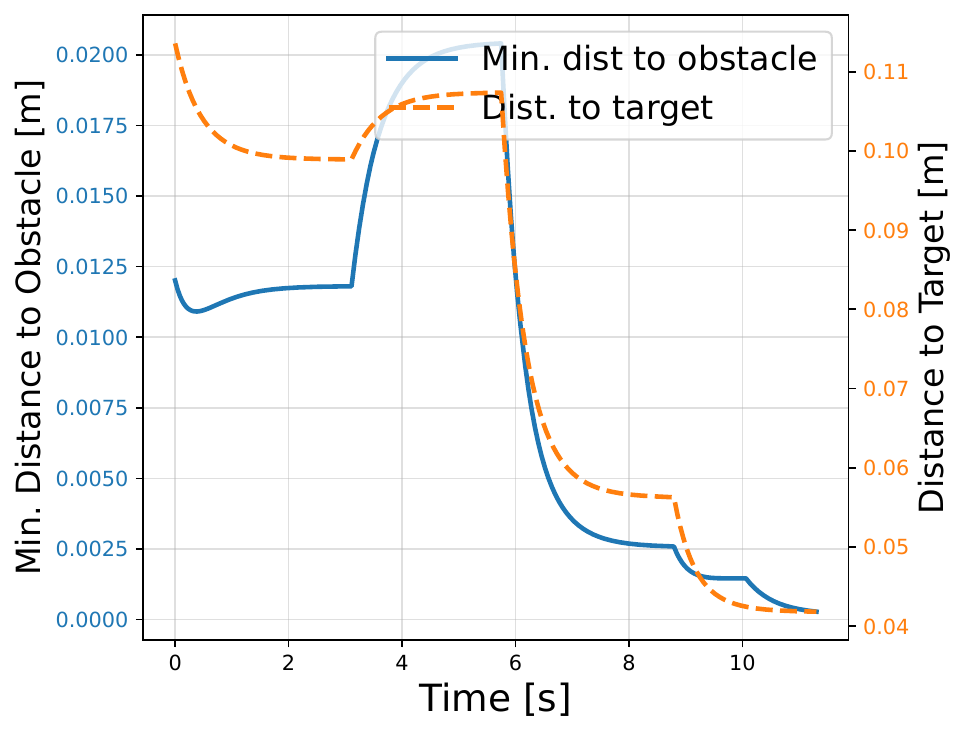}
    }
    \hfill
    \subcaptionbox{Closed-Form CLF-CBF Controller (Proposed)\label{fig:simulation_setpoint_regulation_clf_cbf_values:clfcbf}}[0.48\linewidth]{
        \includegraphics[width=\linewidth, trim={5 5 5 5}]{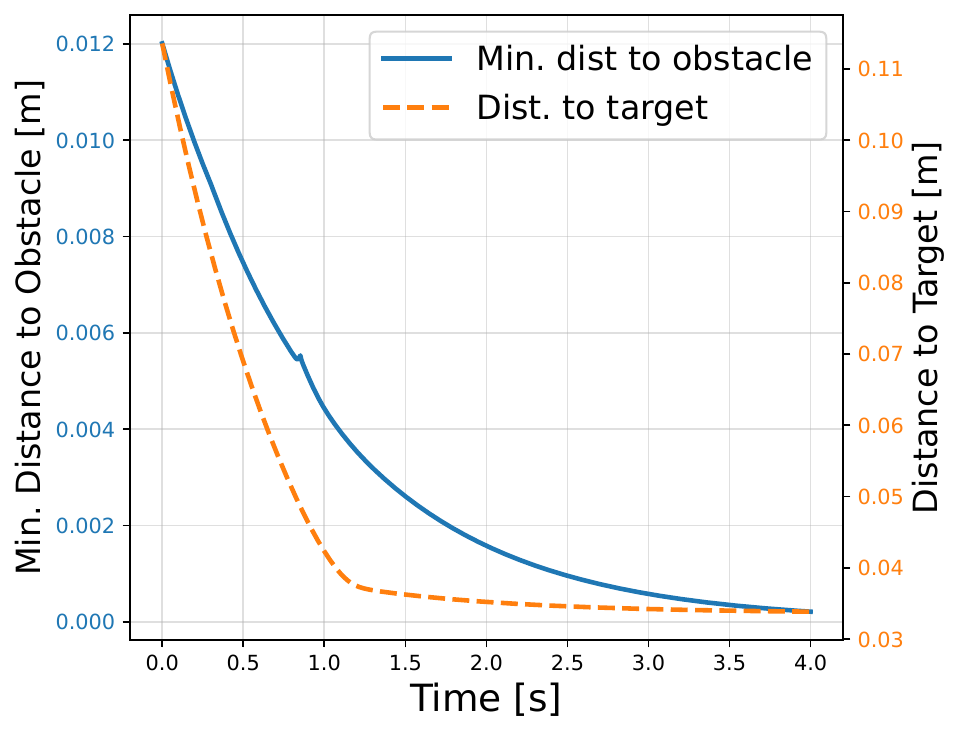}
    }\hfill
    \caption{Setpoint regulation simulation results for the distance to the target (CLF objective) and to the obstacles (CBF constraints). \textbf{Left.} Behavior of a low-level controller tracking a sequence of configuration setpoints provided by an RRT* obstacle avoidance planner. \textbf{Right.} Behavior of the closed-loop system under the proposed closed-form CLF-CBF controller.
    The distance to the target does not reach absolute zero due to the presence of nearby obstacles.}
    \label{fig:simulation_setpoint_regulation_clf_cbf_values}
\end{figure}

\begin{figure}[htbp]
    \centering
    \includegraphics[width=0.95\linewidth]{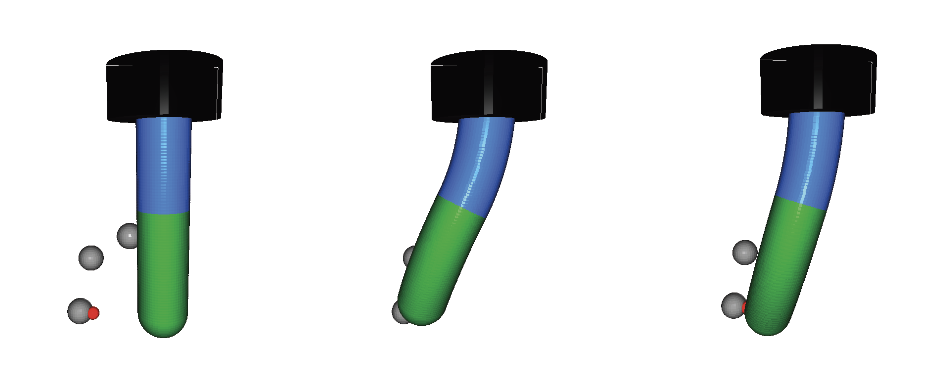}
    \caption{Sequence of stills of the setpoint regulation simulations visualizing the motion of a soft robot consisting of two segments (blue and green) under the closed-form CLF–CBF controller, with time progressing from left to right. The controller steers the robot toward the target denoted with a red dot while preserving a safe distance from surrounding obstacles (grey).}
    \label{fig:simulation_setpoint_regulation_stills}
\end{figure}

\begin{figure}[htbp]
    \centering
    \includegraphics[width=0.95\linewidth]{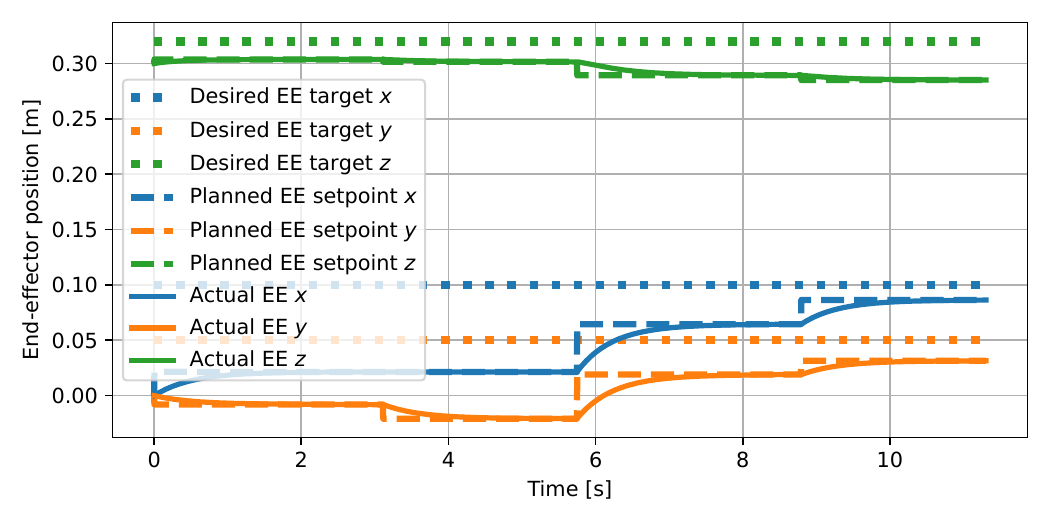}
    \caption{Setpoint regulation simulation results for the time evolution of the end-effector (EE) position under the RRT* baseline collision avoidance strategy. The dotted lines represent the desired EE targets (i.e., the task encoded in the \ac{CLF}), the dashed lines the sequence of the setpoints provided by the RRT* planner that serve as intermediate goals for the low-level kinematic controller, and the solid lines the actual motion of the closed-loop system.}
    \label{fig:simulation_setpoint_regulation_rrtstar_ee_time_evolution}
\end{figure}

\subsection{Trajectory Tracking}
Furthermore, we evaluate the performance of the proposed closed-form controller against the standard \ac{QP}-based controller on a circular trajectory tracking task with $N=2$ and while neglecting shear strains. The inputs are clipped in \SI{\pm 0.02}{m/s}. A reference target lying on the circular trajectory was provided at each time step, and the reference variation was adjusted in different cases to examine whether the controllers could still track the trajectory under faster reference changes. The slack variable $\delta$ is set to zero. A set of spherical obstacles with radii of \SI{0.01}{m},  was placed on the same plane as the robot’s target $z$-plane to introduce controlled collision scenarios and in 120 degrees apart. This setup allows us to assess the controller’s ability to maintain accurate trajectory tracking while ensuring safety under different obstacle configurations.

\Cref{fig:Comparison_Simulation} visualizes the trajectories for both controllers with respect to the circular reference. Notably, the closed-form controller consistently follows the trajectory more accurately and smoothly than the standard \ac{QP}-based controller. The closed-form controller consistently follows the trajectory more accurately and smoothly than the standard QP-based controller. This is because the closed-form solution yields a continuous and closed-form control law, which avoids the discontinuities induced by active-set switching in QP solvers. By embedding the safety constraints directly into the control law, the closed-form controller produces smoother control inputs and reduces abrupt trajectory deviations, particularly near obstacles.

\begin{figure}[htbp]
    \centering
    \includegraphics[width=\columnwidth]{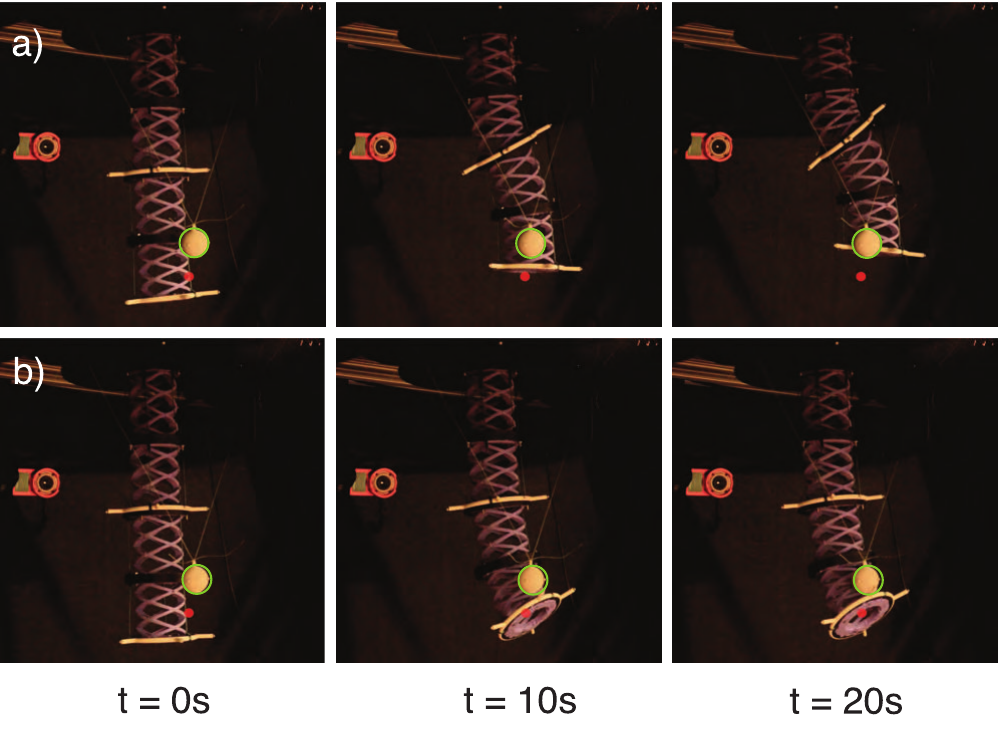}
    \caption{Experimental results. Sequence of stills showing the system evolution under: a) the closed-form CLF–CBF controller, b) a QP CLF controller. The target position is highlighted in red, and the obstacle is outlined in green. Consistently, the closed-form CLF–CBF controller avoids the obstacle entirely, whereas the pure CLF controller reaches the target but violates the safety constraints.}
    \label{fig:stills}
\end{figure}

\section{Laboratory Experiments}
We further assessed the proposed closed-form CLF–CBF controller through a representative setpoint tracking experiment on a tendon-driven soft robotic arm in a 3D laboratory environment.

\subsection{Tendon-Driven Soft Robotic Platform}
\label{sec:hardware_setup}
A tendon-driven, two-segment soft robotic arm is employed for the real-time control experiments. The soft robot design is based on a previous work~\cite{patterson2025design}. Each robot segment is \SI{0.15}{m} long, and \SI{0.036}{m} in radius. Reflective markers are placed at the base and at the tip of each segment, enabling feedback of Cartesian poses through five OptiTrack motion capture cameras. Subsequently, we perform inverse kinematics in closed-form via the SE(3) $\operatorname{Log}$ on the relative transformation between the proximal and distal end of each segment to gather the current configuration of the soft robot $\bm{q}$~\cite{soromox2025}.

Each segment contains three tendons evenly distributed at $120^\circ$ intervals around the backbone. Between the two segments, the tendon routing is offset by $60^\circ$ to improve dexterity and reduce actuation coupling. The tendons are actuated by DYNAMIXEL XM430-W350-R motors operated in velocity control mode. Each motor is equipped with a spool of radius \SI{0.0045}{m}, and constrained in rotation speed to \SI{\pm 16}{RPM}. The controller is evaluated at a rate of \SI{100}{Hz} and communicates with the actuators in real time through \texttt{ROS}, allowing for smooth execution of the CLF–CBF–based controllers and dynamic trajectories.

\begin{figure}[htbp]
    \centering
    \subcaptionbox{Minimum target distance variation with respect to time.\label{fig:target_distance_real_world}}[0.48\linewidth]{
        \includegraphics[width=\linewidth]{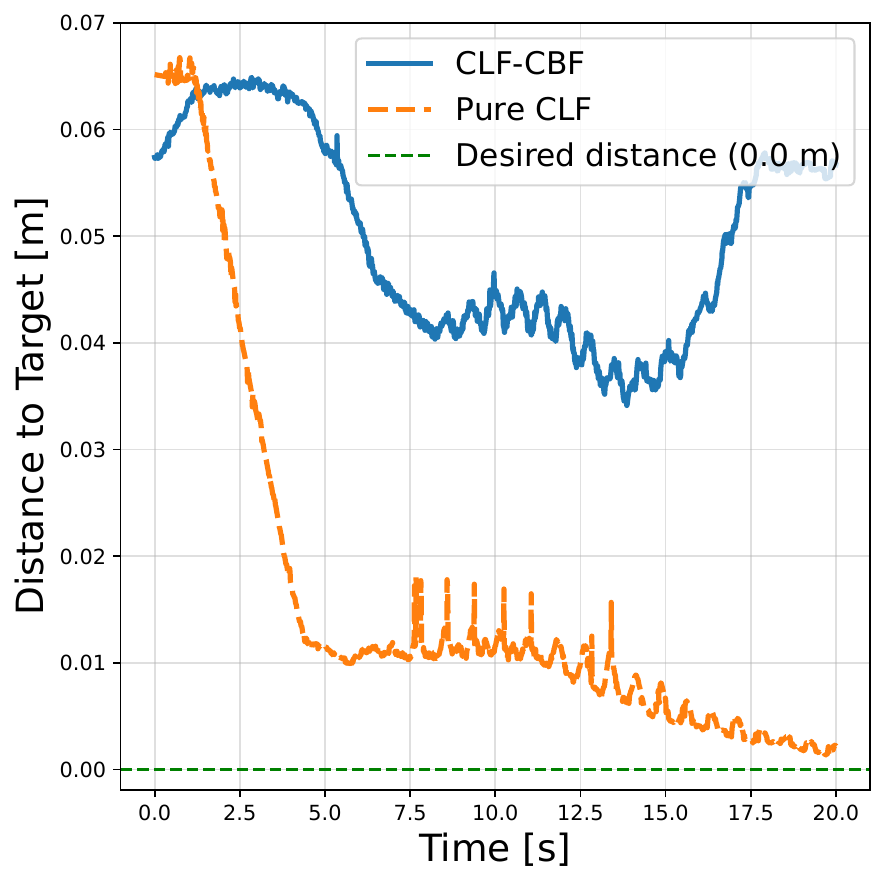}
    }\hfill
    \subcaptionbox{Minimum distance for the whole robotic body to the obstacle.\label{fig:obs_distance_real_world}}[0.48\linewidth]{
        \includegraphics[width=\linewidth]{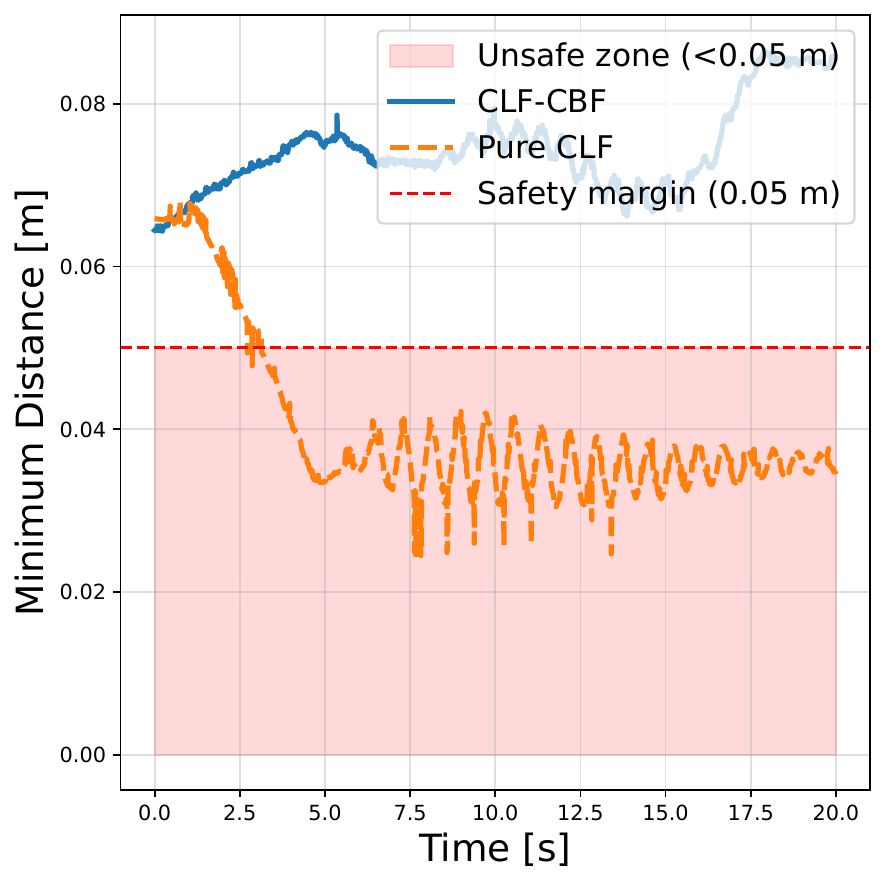}
    }
    \caption{Experimental results. Comparison of target and obstacle distances for closed-form CLF-CBF controller, and a pure QP CLF controller during the hardware experiment.}
    \label{fig:distance_real_world}
\end{figure}

\subsection{Results}
We conducted closed-loop control experiments to qualitatively compare the behavior of the proposed closed-form CLF–CBF controller with that of a pure CLF controller, which does not enforce safety constraints. 
We employ the same software and control packages as in the simulation experiments while neglecting shear strains. For the real robot, the soft body was discretized into $40$ sections to achieve a sufficiently high spatial resolution for collision avoidance, and the slack variable~$\delta$ is chosen as $0$.
In this experiment, the desired target lies within the influence region of the obstacle, rendering exact convergence infeasible under the enforced safety margin. The controller therefore converges to the closest safe equilibrium.

Figure~\ref{fig:stills} shows representative stills of the robot’s motion under both controllers. Under the pure CLF controller, the robot quickly reaches the target but violates the safety distance by colliding with the obstacle. In contrast, the closed-form CLF–CBF controller maintains a safe distance throughout the trajectory while achieving stable convergence.

\Cref{fig:target_distance_real_world} plots the distance to the target over time. The closed-form controller achieves smooth convergence without overshoot, whereas the pure CLF controller reaches the target faster but with no safety enforcement. \Cref{fig:obs_distance_real_world} further depicts the minimum distance to the obstacle. While the pure \ac{CLF} controller enters the unsafe region, the CLF–CBF controller consistently maintains the predefined \SI{0.05}{m} safety margin.

These results provide a qualitative illustration of the controller’s ability to enforce safety constraints in a representative laboratory scenario. This experiment highlights the effectiveness of the closed-form CLF–CBF controller in maintaining safety without sacrificing nominal tracking performance.
\section{Conclusion}
This paper presents a closed-form CLF–CBF controller framework for soft robots, enabling real-time safe control without the need for online optimization. By deriving an explicit solution to the CLF–CBF quadratic program, the proposed method guarantees safety constraint satisfaction while maintaining low computational overhead. The controller is built upon a differentiable \ac{PCS} kinematics model and safety constraints evaluated along the entire soft body, ensuring whole-body safety in cluttered environments. Simulation and laboratory experiments demonstrate that the closed-form controller achieves better setpoint regulation behavior compared to traditional sampling-based planning approaches at two order of magnitudes less computational expense and better tracking performance compared to online optimization-based methods for solving the CLF-CBF QP while strictly respecting safety constraints. 

The main limitation of the current approach is that it aggregates only a single constraint type—specifically, a collision-avoidance barrier—and cannot accommodate heterogeneous constraints (e.g., input bounds, energy budgets, or task-specific requirements). Future work will extend the closed-form CLF–CBF controller to support compositionality across multiple constraint classes and to incorporate hierarchical prioritization, enabling more expressive and flexible safety-critical control. Furthermore, while the current approach neglects the inertia of the soft robot and excels in quasi-static settings, future work could extend it to dynamic collision avoidance scenarios via the use of higher-order CLFs and CBFs~\cite{xiao2021high, wong2025contact}. Finally, we will also examine experimental settings that more closely reflect real-world deployment scenarios.

\bibliographystyle{IEEEtran}
\bibliography{main.bib}
\end{document}